\documentclass[accepted]{uai2026} 

\usepackage[american]{babel}
\usepackage{natbib}
\usepackage{amsmath,amssymb,amsthm}
\usepackage{booktabs}
\usepackage{graphicx}
\usepackage{mathtools}
\usepackage{algorithm}
\usepackage{algpseudocode}

\newcommand{\E}{\mathbb{E}}
\newcommand{\Prob}{\mathbb{P}}

\newcommand{\1}{\mathbf{1}}
\newcommand{\cI}{\mathcal{I}}
\newcommand{\cD}{\mathcal{D}}
\newcommand{\cM}{\mathcal{M}}

\newcommand{\cY}{\mathcal{Y}}

\newcommand{\cH}{\mathcal{H}}
\newcommand{\cF}{\mathcal{F}}

\newcommand{\defeq}{\coloneqq}
\newcommand{\iid}{\overset{\text{i.i.d.}}{\sim}}

\newtheorem{theorem}{Theorem}
\newtheorem{lemma}{Lemma}
\newtheorem{proposition}{Proposition}

\title{Certified Interventional Fidelity:\\
Anytime-Valid, Adaptive Evaluation of Causal Claims in Mechanistic Interpretability}
\author[1]{\href{mailto:amir.asiaeetaheri@vumc.org?Subject=Your UAI 2026 paper}{Amir~Asiaee}{}}
\affil[1]{%
    Department of Biostatistics\\
    Vanderbilt University Medical Center\\
    Nashville, TN 37232, USA
}

\begin{document}
\maketitle

\begin{abstract}
Mechanistic interpretability often evaluates explanations by intervening on a model: swapping hidden states,
patching activations, ablating components, or comparing a compressed model to the original one. These
experiments are usually summarized by a point estimate, even though the evaluation may be monitored while it
runs or adapted toward suspected failures. This makes it hard to tell whether a reported fidelity or patching
effect is a stable causal claim or a consequence of finite sampling and evaluation choices.

We introduce \textbf{Certified Interventional Fidelity (CIF)}, a statistical layer for interventional
interpretability evaluations. CIF first writes the quantity being reported as a causal estimand: an expectation
of a bounded score over a stated input distribution and a stated intervention distribution. It then provides
confidence intervals and anytime-valid confidence sequences for this estimand, including under adaptive
intervention sampling via bounded mixture importance weighting. We instantiate CIF with Hoeffding-style
sequences and variance-adaptive betting sequences, the latter reducing certification cost by $10$--$30\times$
in our experiments. On MNIST abstractions and GPT-2 Small IOI circuits, CIF certifies high-fidelity claims,
shows when apparent method differences are not statistically supported, and makes sensitivity to the
intervention distribution explicit.
\end{abstract}

\section{Introduction}

Deep neural networks can match or exceed human performance, but understanding \emph{how} they compute remains
challenging.
Mechanistic interpretability aims to provide explanations that are faithful simplifications of the internal
computation of a trained model \citep{olah2020circuits,geiger2025causalabstraction}.
A central move in recent interpretability has been the shift from observational probes to
\emph{interventional} evaluations: we modify internal states, paths, or mechanisms and ask whether the
resulting behavior supports a proposed mechanistic explanation.

The explanations being evaluated usually fall into two broad classes. The first class consists of
\emph{abstraction or reduction fidelity claims}: a simpler object, such as a high-level causal model, compressed
network, pruned network, or circuit, is claimed to preserve the relevant causal behavior of the original model.
Causal abstraction and interchange intervention accuracy (IIA) are canonical examples
\citep{geiger2021causalabstractions,geiger2025causalabstraction}; recent work on mechanism transformations
places structured neural compression in the same interventional-fidelity setting
\citep{asiaee2026causalmechanismreductionmechanism}.
The second class consists of \emph{component-effect claims}: a set of heads, neurons, paths, or edges is
claimed to causally support a behavior. Activation patching, path patching, causal tracing, circuit ablation,
and many circuit-discovery evaluations belong to this class
\citep{meng2022rome,goldowsky_dill2023pathpatching,zhang2023patching}.
Causal scrubbing and circuit-completeness evaluations sit near the boundary: depending on the reported
score, they can be viewed either as testing a reduced explanation's fidelity or as measuring the effect of
selected components \citep{chan2022causalscrubbing}.

\paragraph{The missing piece: statistical validity.}
Despite the causal framing, evaluation practice is often informal.
A typical workflow is: sample a few thousand input pairs/prompt pairs; sample a few thousand interventions;
report a single scalar metric.
But interpretability workflows are inherently \emph{sequential} and \emph{adaptive}:
we monitor results, adjust evaluation sets, and hunt for counterexamples.
Without uncertainty quantification and without accounting for adaptivity, it is easy to overstate fidelity,
mis-rank methods, or miss rare but important failure modes.

We propose \textbf{Certified Interventional Fidelity (CIF)}, a statistical layer for turning these
interventional evaluations into certified claims. CIF is deliberately organized around the workflow an
evaluator already follows: choose what population of inputs and interventions the claim is about, run the
interventions, and report what can be concluded with uncertainty. The framework makes three design choices.
\begin{enumerate}
  \item \textbf{Make the target explicit.} Every reported score is written as an expectation over a
  declared input distribution and intervention distribution. This separates the scientific question being
  evaluated from the sampling procedure used to estimate it.
  \item \textbf{Report uncertainty that survives monitoring.} CIF uses anytime-valid confidence
  sequences, which remain valid if the evaluator checks the result repeatedly or stops when the evidence is
  strong enough \citep{howard2021confseq,waudby_smith2024confseq}.
  \item \textbf{Permit adaptive evaluation without changing the claim.} The sampler may focus over
  time on likely failures or high-impact interventions, while bounded mixture importance weighting preserves
  the original target estimand \citep{horvitzthompson1952,ville1939etude}.
\end{enumerate}

\paragraph{Positioning relative to closest prior work.}
CIF connects three lines of work: causal abstraction and interventional interpretability
\citep{geiger2025causalabstraction}, recent diagnoses of unreliable mechanistic-interpretability evaluation
\citep{meloux2025mistatistical}, and off-policy confidence sequences for adaptively sampled data
\citep{karampatziakis2021offpolicy}. The distinction is that CIF turns existing interventional
interpretability metrics into explicit causal estimands and then supplies finite-sample, anytime-valid
uncertainty guarantees for their evaluation. Extended related work is given in
Appendix~\ref{app:related}.

\paragraph{Contributions.}
The central contribution is a reformulation: interventional evaluations of mechanistic explanations can be
treated as sequential causal-inference problems. Abstraction/reduction fidelity claims and component-effect
claims ask different scientific questions, but their empirical evaluations share the same statistical form:
sample an input and an intervention, compute a bounded score, and estimate the population mean of that score.
CIF attaches uncertainty guarantees to this population claim rather than to a method-specific point estimate.
\begin{itemize}
  \item We express two broad classes of interpretability evaluations, abstraction/reduction
  fidelity and component-effect recovery, as bounded causal estimands over inputs and interventions.
  \item We give fixed-budget confidence intervals and anytime-valid confidence sequences for these
  estimands, using both transparent Hoeffding bounds and variance-adaptive betting sequences
  \citep{waudbysmith2024estimating}.
  \item We introduce adaptive CIF, which supports failure-directed sampling while preserving the
  target estimand through bounded mixture importance weighting.
  \item We provide algorithms for certification, paired comparison, adaptive sampling, and
  one-sided stopping rules, together with a reporting checklist for applying CIF to new interventional
  evaluations.
  \item We evaluate CIF on MNIST neural abstractions and GPT-2 Small IOI circuits, showing that it
  can certify high-fidelity claims, identify statistically unsupported method differences, and expose
  sensitivity to the intervention distribution.
\end{itemize}

\section{Background and notation}

\subsection{A running neural-network SCM}

We use a small feedforward network as a running example. Write its computation as
\[
H_1=f_1(X), \qquad H_2=f_2(H_1), \qquad Y=f_3(H_2),
\]
where $X$ is the random input drawn from an evaluation distribution $\cD$, $H_1$ and $H_2$ are hidden activations, and $Y$ is the output
prediction or score vector. The corresponding deterministic SCM is $\cM=(U,V,\cF)$:
$U=\{X\}$ supplies the exogenous input, $V=\{H_1,H_2,Y\}$ contains the endogenous variables, and
$\cF=\{f_1,f_2,f_3\}$ contains the structural equations
\citep{pearl2009causality,geiger2021causalabstractions,asiaee2026causalmechanismreductionmechanism}.
A realized input is denoted $x$. There is no extra randomness inside the network; for fixed $x$ and a
fixed intervention $i$, the resulting activations and output are deterministic.

In this view, an intervention changes one or more structural equations. Setting a neuron to zero
is a hard intervention on a coordinate of $H_2$; replacing the computation of a group of units by an affine
map is a soft intervention; patching an activation from a clean run into a corrupted run replaces the value of
an internal variable during that forward pass.
We write $\mathrm{State}(\cM,x,i)$ for the realized assignment of the endogenous variables after
running model $\cM$ on input $x$ under intervention $i$. It is a vector of values, not a distribution; it
becomes random only when $X$ and $I$ are sampled.

\subsection{Class I: abstraction and reduction fidelity}

In the first class, the explanation is itself a simpler causal object. The low-level model
$\cM_L$ is the original network. The high-level model $\cM_H$ is a candidate explanation produced by some
abstraction or reduction method: for example a hand-written causal model, a pruned network, an affine
compressed network, or a circuit treated as a reduced mechanism. CIF does not construct $\cM_H$; it evaluates
the candidate $\cM_H$ together with the correspondence maps supplied by the method. Two maps specify how the
objects are supposed to correspond:
\[
\tau:\mathrm{States}(\cM_L)\to \mathrm{States}(\cM_H),
\qquad
\omega:\cI_H\to \cI_L .
\]
The state map $\tau$ translates low-level activations into high-level variables. In the running
example, if $\cM_H$ keeps a subset $S$ of second-layer units, then $\tau(h_2)=h_{2,S}$. If $\cM_H$ uses a
learned affine compression, then $\tau(h_2)=Ah_2+b$. The intervention map $\omega$ translates an intervention
on the high-level variables into the corresponding intervention inside the original network. Thus $\tau$ maps
states, while $\omega$ maps interventions. The form of $\omega$ is method-dependent and is part of the
abstraction being evaluated.

The fidelity claim is interventional: $\cM_H$ should match $\cM_L$ not only on ordinary inputs, but
under corresponding manipulations of their internal variables. Schematically,
\[
\tau\!\left(\mathrm{State}(\cM_L,X,\omega(I))\right)
\approx
\mathrm{State}(\cM_H,X,I).
\]
CIF evaluates this comparison through an output-level or score-level discrepancy, which is the quantity
available in most mechanistic-interpretability experiments. The display is pointwise in realized inputs and
interventions; the distributions enter when we average the resulting scores.

\paragraph{Example 1: interchange interventions for abstraction fidelity.}
Let $X$ be a source input, $X'$ a donor input, and $M\in\{0,1\}^k$ a random mask over the high-level
coordinates of $\cM_H$. A sampled high-level intervention is $I=(M,X')$. It replaces the selected abstract
coordinates in the source run by their values from the donor run. The mapped low-level intervention
$\omega(I)$ performs the corresponding swap in $\cM_L$: for a subset abstraction it swaps the original neurons
indexed by the retained coordinates; for a subspace or affine abstraction it applies the low-level operation
specified by the abstraction map. The mask is therefore not an input mask; it indexes an internal
intervention. CIF then asks whether these matched interventions produce similar outcomes in $\cM_L$ and
$\cM_H$.

The intervention distribution $\Pi$ is the sampling rule for these tests; a typical choice is
$M_j \iid \mathrm{Bernoulli}(p)$ and $X'\sim \cD_{\mathrm{donor}}$.
Changing $p$ changes the claim being certified: $p=0.1$ asks for fidelity under mild coordinate swaps,
whereas $p=0.5$ asks for fidelity under a much more severe family of interventions.

\subsection{Class II: component-effect and recovery claims}

In the second class, the explanation is not a separate high-level model. Instead, it is a claim that
some internal component, path, or circuit causally supports a behavior of the original network. The evaluation
intervenes on $\cM_L$ itself and measures a bounded effect. Activation patching, path patching, causal tracing,
and many circuit-ablation studies have this form.

\paragraph{Example 2: activation patching for recovery.}
Let $X=(X_{\mathrm{clean}},X_{\mathrm{corr}})$ be a clean/corrupted prompt pair, and let $I=S$ denote
the component set being restored, such as a set of heads, neurons, paths, or residual-stream locations. The
patched run follows the corrupted prompt except that the activations at $S$ are copied from the clean run. If
$S$ is fixed, then $\Pi$ is a point mass; if the evaluation samples many candidate components, $\Pi$ records
how those components are chosen. As in Example 1, $\Pi$ is not a new causal semantics: it is the population of
intervention tests over which the reported metric is averaged.

\subsection{Bounded interventional scores and estimands}
\label{sec:estimand}

The two classes differ scientifically, and CIF does not force them to use the same score. What they
share is the statistical form: sample an input and an intervention, compute a bounded interventional score, and
estimate its expectation.

\paragraph{Fidelity estimands.}
For abstraction/reduction fidelity, fix a bounded discrepancy $d:\cY\times\cY\to[0,1]$ and define
\begin{multline}
Z \defeq d\!\bigl(\text{Outcome}(\cM_L, X, \omega(I)),\\
\text{Outcome}(\cM_H, X, I)\bigr)\in[0,1].
\label{eq:Zdef}
\end{multline}
Then the interventional risk is $R \defeq \E_{(X,I)\sim\cD\times\Pi}[Z]$, and fidelity is
$F \defeq 1-R$. Here small $Z$ means small disagreement, so larger $F$ means higher fidelity.

\paragraph{Recovery/effect estimands.}
For component-effect evaluations, the bounded score may already be oriented so that larger is better.
For activation patching, let $g(\cdot)$ be a scalar behavior score, such as the
indirect-object-identification (IOI) logit difference. Write $g_{\mathrm{clean}}(X)$, $g_{\mathrm{corr}}(X)$,
and $g_{\mathrm{patch}}(X,I)$ for the score on the clean run, corrupted run, and patched run. A normalized
patching recovery is
\[
\Delta(I,X)
\defeq
\mathrm{clip}_{[0,1]}\!\left(
\frac{g_{\mathrm{patch}}(X,I)-g_{\mathrm{corr}}(X)}
     {g_{\mathrm{clean}}(X)-g_{\mathrm{corr}}(X)}
\right)\in[0,1].
\]
Here $\Delta=0$ means the patch recovers none of the clean behavior, while $\Delta=1$ means full
recovery. CIF certifies the direct mean $\mu \defeq \E_{(X,I)\sim\cD\times\Pi}[\Delta]$.
Appendix~\ref{app:worked_examples} gives additional method mappings in this notation.

\section{Certified Interventional Fidelity}

Once an interventional metric has been chosen, the evaluation problem is statistical: estimate and
certify its population value from a finite number of neural-network runs, while avoiding overconfidence from
monitoring, adaptation, and selection. CIF addresses four recurring tasks: \textbf{(P1)} finite-sample
uncertainty quantification with guaranteed confidence intervals; \textbf{(P2)} sequential experimentation,
where the evaluator may inspect results and stop early; \textbf{(P3)} adaptive failure-directed sampling, where
later interventions focus on observed weaknesses; and \textbf{(P4)} selection and multiple comparisons across
many candidate components, circuits, or abstractions.

We write all targets in a common bounded-mean form. Fix an evaluation design: an input distribution
$\cD$, a target intervention distribution $\Pi$, and a bounded interventional score $Y\in[0,1]$. CIF estimates
and certifies the population mean $\theta\defeq\E_{(X,I)\sim\cD\times\Pi}[Y(X,I)]$. For abstraction fidelity,
$Y=Z$ and $\theta=R$, with fidelity $F=1-R$. For recovery or component-effect metrics, $Y=\Delta$ and
$\theta=\mu$. Thus the statistical machinery below is written for $\theta$; translating back to $F$ or $\mu$
only changes the final one-sided certification rule.

The section adds inferential machinery in the order it is needed in practice. Fixed-budget intervals
address (P1) when the sample size is fixed before looking at results. Confidence sequences address (P2) by
remaining valid under repeated monitoring and data-dependent stopping. Importance weighting addresses (P3)
without changing the target distribution $\Pi$. Betting confidence sequences keep the same anytime guarantees
but reduce certification cost when the observed variance is small. Paired inference and multiplicity control
address (P4). Ordinary fixed-$n$ central-limit-theorem intervals can be useful when $n$ is large and committed
in advance, but CIF emphasizes finite-sample, anytime-valid guarantees because interpretability evaluations are
often monitored and adapted while they run.

\subsection{Fixed-budget confidence intervals}

This is the right tool when the number of intervention tests $n$ is fixed in advance. Draw
$(X_t,I_t)\iid\cD\times\Pi$, compute $Y_t=Y(X_t,I_t)\in[0,1]$, and let
$\widehat\theta_n\defeq n^{-1}\sum_{t=1}^n Y_t$.

\begin{proposition}[Hoeffding confidence interval for a bounded interventional mean]
\label{prop:hoeffding}
For any $\delta\in(0,1)$, with probability at least $1-\delta$,
$|\widehat\theta_n-\theta|\le \sqrt{\log(2/\delta)/(2n)}$.
\end{proposition}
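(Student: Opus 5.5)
The plan is to reduce the statement to the classical Hoeffding inequality for bounded i.i.d. variables. Because $(X_t,I_t)\iid\cD\times\Pi$ and $Y$ is a fixed measurable map into $[0,1]$, the scores $Y_t=Y(X_t,I_t)$ are i.i.d. with common mean $\E[Y_t]=\E_{(X,I)\sim\cD\times\Pi}[Y(X,I)]=\theta$ and range contained in $[0,1]$. Since the network is deterministic given $(x,i)$, as noted in the running SCM, the only randomness in $Y_t$ is the sampling of $(X_t,I_t)$. This justifies the i.i.d. claim.

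The first step is a one-sided tail bound via the Chernoff method. For $\lambda>0$, Markov's inequality gives
\[
\Prob(\widehat\theta_n-\theta\ge\varepsilon)\le e^{-\lambda n\varepsilon}\prod_{t=1}^n\E\bigl[e^{\lambda(Y_t-\theta)}\bigr],
\]
where independence lets the moment generating function factor. Hoeffding's lemma gives $\E[e^{\lambda(Y_t-\theta)}]\le e^{\lambda^2/8}$ for a centered variable supported on an interval of length $1$. This yields $\Prob(\widehat\theta_n-\theta\ge\varepsilon)\le\exp(-\lambda n\varepsilon+n\lambda^2/8)$. Choosing $\lambda=4\varepsilon$ gives the bound $\exp(-2n\varepsilon^2)$. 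Applying the same argument to $1-Y_t$ gives the lower tail, and a union bound gives $\Prob(|\widehat\theta_n-\theta|\ge\varepsilon)\le 2\exp(-2n\varepsilon^2)$.

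The last step sets $2\exp(-2n\varepsilon^2)=\delta$, that is, $\varepsilon=\sqrt{\log(2/\delta)/(2n)}$, and takes the complement to get the stated event with probability at least $1-\delta$.

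The only nontrivial ingredient is Hoeffding's lemma. I would invoke it as a standard result. If a self-contained argument were wanted, I would prove it by bounding $e^{\lambda y}$ by its chord on $[0,1]$ using convexity, then showing that the log of the resulting expression has second derivative at most $1/4$. Otherwise the main care point is the modeling step: confirming that the fixed-$n$, non-adaptive design really gives i.i.d. bounded draws whose mean equals the declared estimand $\theta$.
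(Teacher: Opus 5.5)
Your proof is correct and takes the same route as the paper: you establish that the $Y_t$ are i.i.d.\ in $[0,1]$ with mean $\theta$, apply the two-sided Hoeffding bound $2\exp(-2n\varepsilon^2)$, and solve for $\varepsilon$. The only difference is that you rederive Hoeffding's inequality from the Chernoff method and Hoeffding's lemma, with the correct choice $\lambda=4\varepsilon$, whereas the paper simply cites the inequality.
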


For fidelity, this interval is applied to $R$ and transformed to $F=1-R$; for patching recovery, it is
applied directly to $\mu$. If $Y_t$ is Bernoulli, as in 0--1 disagreement for IIA, exact binomial intervals can
also be used.

\paragraph{Compute planning.}
To achieve half-width $\varepsilon$ at confidence $1-\delta$, Hoeffding requires
$n \ge \log(2/\delta)/(2\varepsilon^2)$. This is useful when the forward-pass budget is fixed in advance, but
it does \emph{not} justify checking repeatedly and stopping when the result looks favorable.

\subsection{Anytime-valid confidence sequences (CSs)}

Interpretability evaluations are often monitored while they run: after 100, 500, or 1{,}000
interventions, we may decide whether to stop, continue, or inspect failures. Fixed-$n$ confidence intervals do
not protect this workflow. A \emph{confidence sequence} $(C_n)_{n\ge1}$ satisfies
$\Prob(\forall n\ge1:\theta\in C_n)\ge1-\delta$, so it remains valid under arbitrary stopping and repeated
monitoring \citep{howard2021confseq}.

A simple construction spends the error probability over time. Let
$\delta_n=6\delta/(\pi^2n^2)$ and $b_n=\sqrt{\log(2/\delta_n)/(2n)}$.

\begin{theorem}[Anytime Hoeffding confidence sequence via spending]
\label{thm:anytime}
With the above spending schedule, with probability at least $1-\sum_{n\ge1}\delta_n=1-\delta$,
for all $n\ge1$ simultaneously,
$\theta\in[\widehat\theta_n-b_n,\widehat\theta_n+b_n]$.
\end{theorem}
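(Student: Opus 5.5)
The plan is to combine Proposition~\ref{prop:hoeffding}, applied at each fixed $n$, with a union bound over $n\ge1$. The only subtlety is that Proposition~\ref{prop:hoeffding} is a fixed-$n$ statement, while Theorem~\ref{thm:anytime} asks for one event holding for every $n$ at once. I will avoid any martingale or optional-stopping argument: I will work with the single infinite i.i.d.\ sequence $(X_t,I_t)_{t\ge1}$ and, for each $n$, define a failure event that depends only on the first $n$ draws.

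\textbf{Step 1 (per-$n$ failure events).} For each $n\ge1$, let
\[
A_n\defeq\bigl\{|\widehat\theta_n-\theta|>b_n\bigr\},\qquad b_n=\sqrt{\log(2/\delta_n)/(2n)} .
\]
The variables $Y_1,\dots,Y_n$ are i.i.d.\ in $[0,1]$ with mean $\theta$. Since $\delta_n=6\delta/(\pi^2n^2)\in(0,1)$ for every $n$ (because $\delta<1$ and $6/\pi^2<1$), Proposition~\ref{prop:hoeffding} applies with confidence level $\delta_n$ and gives $\Prob(A_n)\le\delta_n$. The definition of $b_n$ is exactly the Hoeffding half-width at level $\delta_n$, so no extra calculation is needed.

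\textbf{Step 2 (union bound).} Countable subadditivity gives
\[
\Prob\Bigl(\bigcup_{n\ge1}A_n\Bigr)\le\sum_{n\ge1}\delta_n=\frac{6\delta}{\pi^2}\sum_{n\ge1}\frac1{n^2}=\delta,
\]
using $\sum_{n\ge1} n^{-2}=\pi^2/6$. On the complement $\bigcap_{n\ge1}A_n^c$, which has probability at least $1-\delta$, we have $\theta\in[\widehat\theta_n-b_n,\widehat\theta_n+b_n]$ for every $n$ simultaneously. This is the claim.

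\textbf{Step 3 (anytime validity).} Because the coverage event is simultaneous over all $n$, it also holds at any data-dependent stopping time $\tau$: $\theta\in C_\tau$ on the same event. No optional-stopping theorem is required, which is the advantage of the spending construction over a martingale argument.

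I expect no real obstacle. The point that most needs care is that the union bound is taken over events defined on a common infinite sample path; it is not the same as repeating independent experiments. It is also worth noting that the resulting width $b_n\asymp\sqrt{\log n/n}$ is valid but loose, which is what motivates the betting sequences introduced later.
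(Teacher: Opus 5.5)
Your proposal is correct and matches the paper's proof essentially step for step: it applies Proposition~\ref{prop:hoeffding} at level $\delta_n$ for each fixed $n$, takes a union bound using $\sum_{n\ge1}\delta_n=\delta$, and notes that simultaneous coverage gives validity at any data-dependent stopping time. Your explicit check that $\delta_n\in(0,1)$, so that the proposition applies, is a small detail the paper leaves implicit.
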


\paragraph{Certification via one-sided bounds.}
Write $U_n=\widehat\theta_n+b_n$ for the upper confidence bound (UCB) and
$L_n=\widehat\theta_n-b_n$ for the lower confidence bound (LCB). To certify fidelity $F\ge F_0$, apply the CS
to $R$ and stop when $1-U_n(R)\ge F_0$. To certify recovery $\mu\ge\mu_0$, apply the CS to $\mu$ and stop when
$L_n(\mu)\ge\mu_0$. The confidence-sequence guarantee makes either stopping rule valid even though the stopping
time is data-dependent.

\subsection{Adaptive failure-directed sampling via bounded importance sampling}
\label{sec:is}

The previous subsections assume interventions are sampled from the target distribution $\Pi$. In
practice, once failures appear, we may want to sample more interventions like them. This can find
counterexamples faster, but a naive average under the adaptive sampler estimates the wrong population. CIF uses
importance weighting to keep the estimand tied to the original $\Pi$.

\paragraph{Adaptive intervention policies.}
At time $t$, after observing the history $\cH_{t-1}$, choose a proposal distribution
$q_t(\cdot\,|\,\cH_{t-1})$ over interventions and sample $I_t\sim q_t$.
The proposal may depend arbitrarily on prior observations, enabling counterexample mining, but it must
be chosen before observing the current score $Y_t$.

\paragraph{Mixture proposals to bound weights.}
Let $\Pi$ be the \emph{target} intervention distribution and $\tilde q_t$ any adaptive auxiliary proposal.
Define the mixture $q_t \defeq (1-\alpha)\,\Pi + \alpha\, \tilde q_t$ for $\alpha\in(0,1)$.
Then the importance weight $w_t \defeq \Pi(I_t)/q_t(I_t) \le 1/(1-\alpha) \defeq W_{\max}$,
so weights remain bounded even when $\tilde q_t$ is highly concentrated.

\paragraph{Weighted estimator.}
The Horvitz--Thompson-style estimator \citep{horvitzthompson1952} is
$\widehat\theta^{\mathrm{IS}}_n\defeq n^{-1}\sum_{t=1}^n w_t Y_t$.

\begin{lemma}[Unbiasedness under adaptive proposals]
\label{lem:unbiased}
For any adaptive sequence $(q_t)$ with full support wherever $\Pi>0$,
$\E[\widehat\theta^{\mathrm{IS}}_n]=\theta$. Moreover,
$M_n\defeq\sum_{t=1}^n(w_t Y_t-\theta)$ is a martingale w.r.t.\ the history filtration.
\end{lemma}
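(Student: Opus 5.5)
The plan is to prove the martingale statement first and obtain unbiasedness as a corollary by taking expectations.

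\textbf{Setup.} Let $\cH_t$ be the history filtration generated by $(X_s,I_s,Y_s)_{s\le t}$. The sampling model is as follows. Given $\cH_{t-1}$, the evaluator draws $X_t\sim\cD$ independently of the past. The proposal $q_t$ is $\cH_{t-1}$-measurable, and we draw $I_t\sim q_t(\cdot\,|\,\cH_{t-1})$ independently of $X_t$. The score is then the deterministic function $Y_t=Y(X_t,I_t)\in[0,1]$.

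\textbf{Key step: the one-step conditional mean.} The main computation is
\[
\E[w_tY_t\mid\cH_{t-1}]=\theta .
\]
Condition on $\cH_{t-1}$, so that $q_t$ is fixed, and use the product structure of $X_t$ and $I_t$. For the discrete case, write
\[
\E[w_tY_t\mid\cH_{t-1}]=\E_{X\sim\cD}\Bigl[\sum_{i:\,q_t(i)>0} q_t(i)\,\frac{\Pi(i)}{q_t(i)}\,Y(X,i)\Bigr].
\]
The support condition ($q_t>0$ wherever $\Pi>0$) ensures that no $\Pi$-mass is lost when restricting to $q_t(i)>0$. The sum therefore equals $\sum_i \Pi(i)Y(X,i)$, and averaging over $X$ gives $\theta$. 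In the general case, $\Pi/q_t$ is the Radon--Nikodym derivative and the support condition is absolute continuity $\Pi\ll q_t$; the same change of measure applies.

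I should also note the hypotheses that justify this step. The mixture construction $q_t=(1-\alpha)\Pi+\alpha\tilde q_t$ automatically satisfies the support condition. It also gives $0\le w_tY_t\le W_{\max}$, so all expectations are finite and the conditional expectation is well defined.

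\textbf{Martingale and unbiasedness.} Define the increments $D_t=w_tY_t-\theta$. Each $D_t$ is $\cH_t$-measurable and bounded, hence integrable, and the key step gives $\E[D_t\mid\cH_{t-1}]=0$. Therefore $M_n=\sum_{t\le n}D_t$ is adapted, integrable, and satisfies $\E[M_n\mid\cH_{n-1}]=M_{n-1}$, so it is a martingale. Taking full expectations gives $\E[M_n]=M_0=0$, hence $\E[\sum_t w_tY_t]=n\theta$. Dividing by $n$ yields $\E[\widehat\theta^{\mathrm{IS}}_n]=\theta$.

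\textbf{Main obstacle.} The delicate part is precision about the filtration and measurability. The proposal must depend only on $\cH_{t-1}$, not on the current score $Y_t$; this is exactly the paper's requirement that $q_t$ be chosen before observing $Y_t$. The inputs $X_t$ must be drawn fresh from $\cD$, independent of the past, given the history. Once these are stated explicitly, the argument reduces to the one-line change-of-measure identity in the key step.
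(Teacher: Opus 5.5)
Your proof is correct and follows the paper's argument. You condition on $\cH_{t-1}$, use the independence of $X_t\sim\cD$ and $I_t\sim q_t$ to cancel $q_t(i)\,\Pi(i)/q_t(i)=\Pi(i)$, and conclude $\E[w_tY_t\mid\cH_{t-1}]=\theta$, which gives both the martingale property and unbiasedness; the only cosmetic difference is that you derive unbiasedness from $\E[M_n]=0$ instead of directly. One small refinement: the lemma covers arbitrary full-support $q_t$, not just the mixture, so rather than using the bound $W_{\max}$ for integrability, note that $w_tY_t\ge 0$ and $\E[w_tY_t\mid\cH_{t-1}]=\theta\le 1$, which gives integrability in general.
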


\begin{theorem}[Anytime-valid CS under adaptive sampling]
\label{thm:adaptive}
Assume $Y_t\in[0,1]$, $w_t\le W_{\max}$ almost surely, inputs $X_t\iid\cD$, and the proposal $q_t$ is
$\cH_{t-1}$-measurable. With $\delta_n=6\delta/(\pi^2n^2)$ and
$b_n^{\mathrm{IS}}\defeq W_{\max}\sqrt{\log(2/\delta_n)/(2n)}$, we have with probability at least $1-\delta$,
for all $n\ge1$, $\theta\in[\widehat\theta^{\mathrm{IS}}_n-b_n^{\mathrm{IS}},
\widehat\theta^{\mathrm{IS}}_n+b_n^{\mathrm{IS}}]$.
\end{theorem}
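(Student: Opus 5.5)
The plan is to reduce Theorem~\ref{thm:adaptive} to a fixed-$n$ concentration inequality for bounded martingale differences, then union-bound over $n$ with the same spending schedule used for Theorem~\ref{thm:anytime}. Define $D_t\defeq w_tY_t-\theta$ and $M_n=\sum_{t\le n}D_t$ as in Lemma~\ref{lem:unbiased}. First I will verify that $(D_t)$ is a martingale difference sequence with respect to $\cH_t$. Conditionally on $\cH_{t-1}$, the proposal $q_t$ is fixed, $X_t\sim\cD$ is drawn independently of $(\cH_{t-1},I_t)$, and $I_t\sim q_t$. Hence
\[
\E[w_tY_t\mid\cH_{t-1}]=\E_{X\sim\cD}\textstyle\sum_i q_t(i)\frac{\Pi(i)}{q_t(i)}Y(X,i)=\theta .
\]
The support condition is automatic for the mixture, since $q_t\ge(1-\alpha)\Pi$. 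This is exactly Lemma~\ref{lem:unbiased}, which I will simply invoke. For continuous intervention spaces the sum becomes an integral with densities, and nothing else changes.

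Second, I will bound the increments. Since $Y_t\in[0,1]$ and $0\le w_t\le W_{\max}$, we have $w_tY_t\in[0,W_{\max}]$ almost surely. So each $D_t$ lies in an interval of length $W_{\max}$, namely $[-\theta,W_{\max}-\theta]$, whose endpoints are deterministic and in particular $\cH_{t-1}$-predictable. The Azuma--Hoeffding inequality for martingales with increments in predictable intervals of length $c_t=W_{\max}$ then gives, for each fixed $n$ and $\epsilon>0$,
\[
\Prob\bigl(|M_n|\ge n\epsilon\bigr)\le 2\exp\!\bigl(-2n^2\epsilon^2/(nW_{\max}^2)\bigr)=2\exp\!\bigl(-2n\epsilon^2/W_{\max}^2\bigr).
\]
Setting $\epsilon=b^{\mathrm{IS}}_n=W_{\max}\sqrt{\log(2/\delta_n)/(2n)}$ makes the right side equal to $\delta_n$. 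Since $M_n/n=\widehat\theta^{\mathrm{IS}}_n-\theta$, this says $\theta\notin[\widehat\theta^{\mathrm{IS}}_n\pm b_n^{\mathrm{IS}}]$ with probability at most $\delta_n$.

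Third, I will apply a union bound over $n\ge1$. The total failure probability is at most $\sum_n\delta_n=\frac{6\delta}{\pi^2}\sum_n n^{-2}=\delta$, which gives simultaneous coverage, exactly as in the proof of Theorem~\ref{thm:anytime}. Simultaneous validity then implies validity at any data-dependent stopping time.

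The main obstacle is the first step: making sure the Hoeffding--Azuma lemma legitimately applies under adaptivity. This needs two things. First, the conditional mean of $w_tY_t$ must be exactly $\theta$; this uses that $q_t$ is chosen before $Y_t$ and that $X_t$ is independent of the past. Second, the range of $D_t$ must be predictable; here it is deterministic, which is easier. I will state the conditional Hoeffding lemma explicitly, $\E[e^{\lambda D_t}\mid\cH_{t-1}]\le e^{\lambda^2W_{\max}^2/8}$, and chain it through the tower property to get the exponential bound on $\E e^{\lambda M_n}$. Markov's inequality with the optimized $\lambda$ then gives the display above.
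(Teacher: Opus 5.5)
Your proposal is correct and follows essentially the same route as the paper: invoke Lemma~\ref{lem:unbiased} for the martingale property, apply Azuma--Hoeffding with increments of range width $W_{\max}$ to get the fixed-$n$ tail $2\exp(-2n\epsilon^2/W_{\max}^2)$, set $\epsilon=b_n^{\mathrm{IS}}$, and take a union bound using $\sum_n\delta_n=\delta$. Your explicit chaining of the conditional Hoeffding lemma through the tower property just unpacks the Azuma--Hoeffding step, which the paper cites directly.
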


\paragraph{Choosing the proposal.}
The mixture rate $\alpha$ controls an efficiency tradeoff. Larger $\alpha$ gives the auxiliary proposal
more influence, which can expose rare failures sooner, but it also increases $W_{\max}=1/(1-\alpha)$ and can
widen worst-case bounds. This choice affects efficiency, not the estimand: as long as the weights are used, the
target remains the original $\theta$ under $\Pi$. For transparency, adaptive evaluations should report
$\alpha$, describe the auxiliary proposal, and compare against an i.i.d.\ baseline when certification cost is a
central claim.

\subsection{Variance-adaptive confidence sequences via betting}
\label{sec:betting}

The Hoeffding-style CSs above are simple and transparent, but conservative: the radius depends only on
$n$ and the range of observations, not on the empirical variance. For importance-weighted observations, this
conservatism introduces a multiplicative $W_{\max}$ penalty that can make adaptive sampling \emph{slower} than
i.i.d.\ sampling. Betting CSs keep anytime validity while adapting to the observed variance.

Betting-based confidence sequences \citep{waudbysmith2024estimating} address this conservatism by adapting
to the empirical variance of the data. To avoid overloading notation, let $B_t\in[0,1]$ denote the bounded
observation passed to the betting tracker. The key idea is to construct a \emph{capital process} for each
candidate mean $m\in[0,1]$:
\begin{equation}
K_0(m) = 1, \qquad
K_n(m) = \prod_{t=1}^n \big(1 + \lambda_t(B_t - m)\big),
\label{eq:capital}
\end{equation}
where $\lambda_t$ is a ``bet'' chosen based on past data. By Ville's inequality, if
$\E[B_t\mid \cH_{t-1}]=m$ under the candidate mean, then
$\Prob(\exists\, n: K_n(m)\ge 1/\delta)\le\delta$, so the confidence set
$C_n = \{m : K_n(m) < 1/\delta\}$ is an anytime-valid CS.

We use the \emph{Online Newton Step} (ONS) betting strategy, which sets
\begin{align*}
\lambda_{n+1} &= \mathrm{clip}_{[-c,c]}\!\bigl(\lambda_n + \eta\, g_n/A_n\bigr),\\
g_n &= \tfrac{B_n - m}{1+\lambda_n(B_n-m)},\quad
A_n = A_{n-1} + g_n^2,
\end{align*}
with $\lambda_1=0$, $A_0=1$, $\eta = 2/(2-\log 3)\approx 2.22$, and $c=1/2$.
In practice, we find $C_n$ as an interval $[L_n, U_n]$ via bisection on \eqref{eq:capital},
evaluating $\log K_n(m)$ in $O(n)$ per candidate.

\paragraph{Importance-weighted extension.}
For ordinary i.i.d.\ evaluation, set $B_t=Y_t$ and the betting CS targets $\theta$. For adaptive CIF
with weights $w_t\le W_{\max}$, set
$B_t=\widetilde Y_t \defeq w_t Y_t / W_{\max} \in [0,1]$, run the betting CS on $B_t$ to obtain an interval for
$\E[B_t]=\theta/W_{\max}$, and scale back. Because betting adapts to the empirical variance of $B_t$, the
$W_{\max}$ factor no longer acts as a fixed multiplier on the radius; when $\mathrm{Var}(w_t Y_t)$ is small,
the betting CS can be much tighter than Hoeffding.

Algorithm~\ref{alg:cif} only requires a streaming tracker: after each intervention, update with the
observed score and optional importance weight, then query lower and upper confidence bounds. Hoeffding and
betting CSs therefore serve as interchangeable evaluation layers.

\subsection{Comparing two explanations with paired, anytime-valid inference}

When comparing two abstractions, circuits, or component sets, evaluate both on the same sampled
$(X_t,I_t)$. The paired difference $D_t=Y_t^{(1)}-Y_t^{(2)}\in[-1,1]$ often has much lower variance than two
separate estimates. The same CS machinery applies to $\E[D_t]$ after rescaling the range, and one can stop
early when a one-sided CS excludes $0$.

\subsection{Multiple comparisons and selection}
\label{sec:multi}

The previous guarantees are for a single pre-specified estimand. When scanning $K$ components,
circuits, or abstractions and selecting the best-looking result, uncorrected intervals are optimistic. Simple
options are to:
(i) allocate $\delta/K$ to each component (Bonferroni),
(ii) use sequential ``peeling'' (stop sampling clearly suboptimal candidates),
or (iii) use anytime-valid false discovery rate methods~\citep{wang2022fdr}.
In the experiments we recommend reporting both naive point estimates and Bonferroni-corrected CSs to show how
selection affects confidence.

\paragraph{Algorithms.}
Algorithm~\ref{alg:cif} summarizes the adaptive two-sided loop for a generic bounded score $Y$.
Algorithm~\ref{alg:certify} gives one-sided certification; the user supplies whether larger or smaller values
are favorable.

\begin{algorithm}[t]
\caption{CIF with adaptive sampling (two-sided CS)}
\label{alg:cif}
\begin{algorithmic}[1]
\Require target distribution $\Pi$; mixture rate $\alpha$; confidence $\delta$; target half-width $\varepsilon$;
adaptive proposal builder $\tilde q_t(\cdot\,|\,\cH_{t-1})$
\State $n\leftarrow 0$, $\widehat{\theta}\leftarrow 0$
\While{true}
  \State $n\leftarrow n+1$
  \State build $\tilde q_n(\cdot\,|\,\cH_{n-1})$ (e.g., AtP*/gradients or past failures)
  \State $q_n \leftarrow (1-\alpha)\Pi + \alpha \tilde q_n$
  \State sample $(X_n,I_n)\sim \cD\times q_n$, compute $Y_n\in[0,1]$
  \State $w_n \leftarrow \Pi(I_n)/q_n(I_n)$
  \State update $\widehat{\theta}\leftarrow \widehat{\theta} + \frac{1}{n}\big(w_n Y_n - \widehat{\theta}\big)$
  \State set $\delta_n\!\leftarrow\!6\delta/(\pi^2 n^2)$ and $b_n^{\mathrm{IS}} \leftarrow \frac{1}{1-\alpha}\sqrt{\frac{\log(2/\delta_n)}{2n}}$
  \If{$b_n^{\mathrm{IS}} \le \varepsilon$} \State \textbf{break} \EndIf
\EndWhile
\State \Return certified interval $[\widehat{\theta}-b_n^{\mathrm{IS}},\widehat{\theta}+b_n^{\mathrm{IS}}]$
\end{algorithmic}
\end{algorithm}

\begin{algorithm}[t]
\caption{CIF-Certify: one-sided certification}
\label{alg:certify}
\begin{algorithmic}[1]
\Require target distribution $\Pi$; confidence $\delta$; target $\theta_0$; direction larger/smaller-is-better
\State run CIF to maintain $[\widehat{\theta}_n-b_n,\widehat{\theta}_n+b_n]$
\If{larger-is-better and $\widehat{\theta}_n-b_n\ge \theta_0$} \State \Return \textbf{Certified} \EndIf
\If{smaller-is-better and $\widehat{\theta}_n+b_n\le \theta_0$} \State \Return \textbf{Certified} \EndIf
\If{budget exhausted} \State \Return \textbf{Not certified} \EndIf
\end{algorithmic}
\end{algorithm}

Proofs of the formal statements in this section are given in Appendix~\ref{app:proofs}. A short
reporting checklist is provided in Appendix~\ref{app:reporting}.

\section{Experiments}
\label{sec:experiments}

We evaluate CIF in four settings: a controlled MNIST abstraction benchmark, a GPT-2 circuit
evaluation on the IOI task, a sensitivity analysis over intervention distributions and metrics, and a coverage
check under repeated monitoring. The first two settings test whether confidence sequences can certify useful
interventional claims at realistic forward-pass budgets; the latter two check that the reported conclusions are
stable to design choices and valid under the sequential workflows CIF is meant to support.

\subsection{E1: Certified IIA for pruned and transformed abstractions}

The first experiment evaluates abstraction fidelity under interchange interventions. The goal is not to
declare one compression method universally best, but to ask which claims can be certified once uncertainty is
reported.

\paragraph{Setup.}
We use a three-layer MLP ($784 \to 512 \to 512 \to 10$) trained on MNIST to $>98\%$ test accuracy,
following the architecture used in prior structured-mechanism work
\citep{asiaee2026causalmechanismreductionmechanism}.
We construct abstractions at the second hidden layer using four methods:
(i) hard interventions via structured pruning (zeroing coordinates and removing the corresponding
columns from the downstream weight matrix),
(ii) soft interventions via affine mechanism replacement (merging neurons using learned affine maps
and compiling the result into a smaller dense network),
(iii) a variance-based pruning baseline (removing coordinates with lowest activation variance
across the training set), and
(iv) a random pruning baseline.
For each abstraction, we retain $k \in \{64, 128, 256\}$ of the 512 hidden units
(additional pruning levels are reported in Appendix~\ref{app:extra_k}),
yielding 12 reported abstraction pairs.
We evaluate under interchange interventions: for each trial we draw a pair $(x, x')$ of MNIST images
and a binary swap mask $m \in \{0,1\}^k$ (each coordinate swapped independently with probability $p=0.5$),
then compare the predictions of $\cM_L$ and $\cM_H$ under the corresponding interventions
\citep{geiger2021causalabstractions}.

\paragraph{Metrics.}
The main score is the disagreement indicator $Z \in \{0,1\}$, so $F=1-\E[Z]$ is interchange
intervention accuracy.
We also use bounded KL and $L_2$ discrepancies in the sensitivity analysis (E3).
For each method and compression level, we report:
(a) confidence sequences for $F$, and
(b) the stopping time, measured in forward passes, to certify $F \ge F_0$ for
$F_0 \in \{0.90, 0.95, 0.99\}$.

\paragraph{Adaptive proposal.}
For the adaptive runs, the auxiliary proposal places more mass on coordinates with larger downstream
weight norm, $\tilde{q}_t(j) \propto \|W_{:,j}\|_2$. We mix this proposal with the target Bernoulli-mask
distribution using $\alpha=0.3$. Appendix~\ref{app:adaptive} compares this static proposal with a
history-updated failure proposal and sweeps $\alpha$.

\paragraph{Results.}
Figure~\ref{fig:e1_cs} illustrates the main inferential pattern. Hoeffding widths depend only on
sample size and range, whereas betting intervals shrink with the empirical variance of the observed scores.

\begin{figure}[t]
\centering
\includegraphics[width=\columnwidth]{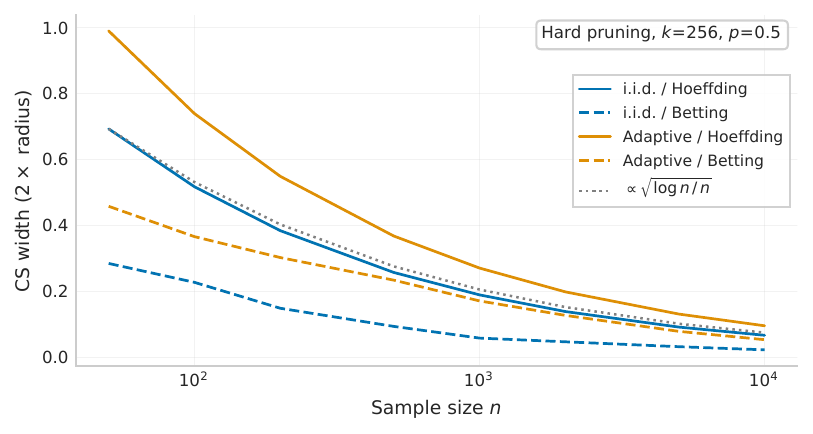}
\caption{Confidence-sequence width versus forward passes for hard pruning at $k\!=\!256$,
$p\!=\!0.5$. Solid curves use Hoeffding CSs; dashed curves use betting CSs. Blue denotes i.i.d.\ sampling,
orange denotes adaptive mixture sampling with $\alpha\!=\!0.3$, and the gray dotted curve shows a
$\sqrt{\log n / n}$ reference rate.
}
\label{fig:e1_cs}
\end{figure}

At the aggressive swap probability $p=0.5$, no non-identity abstraction reaches the certification
threshold $F\ge0.90$ (Table~\ref{tab:e1_results}). This is a useful negative result: under large coordinate
swaps, point estimates alone would still rank methods, but CIF shows that none supports a high-fidelity claim
at this threshold. A paired comparison between soft interventions and random pruning at $k=256$ further shows
why interval choice matters: the estimated difference is $-0.033$, with Hoeffding radius $0.067$ and betting
radius $0.016$.

At the milder swap probability $p=0.1$, several abstractions become certifiable
(Table~\ref{tab:e1_mild}). Variance-based pruning at $k=256$ certifies $F\ge0.90$ in 64 forward passes with
betting, compared with 1{,}141 under Hoeffding. Soft interventions at $k=256$ show the same pattern:
certification requires 93 versus 1{,}219 samples for $F\ge0.90$, and 342 versus 9{,}849 samples for
$F\ge0.95$. Figure~\ref{fig:e1_cost} summarizes these reductions at $k=256$.

Adaptive sampling is not uniformly faster for certification. In this high-fidelity regime, the
adaptive mixture roughly doubles the sample count for variance-based pruning at $k=256$ (131 samples versus
64 under i.i.d.\ betting). Appendix~\ref{app:adaptive} shows the complementary case: adaptive proposals are
more useful for finding failures than for certifying already high fidelities.

\begin{figure}[t]
\centering
\includegraphics[width=\columnwidth]{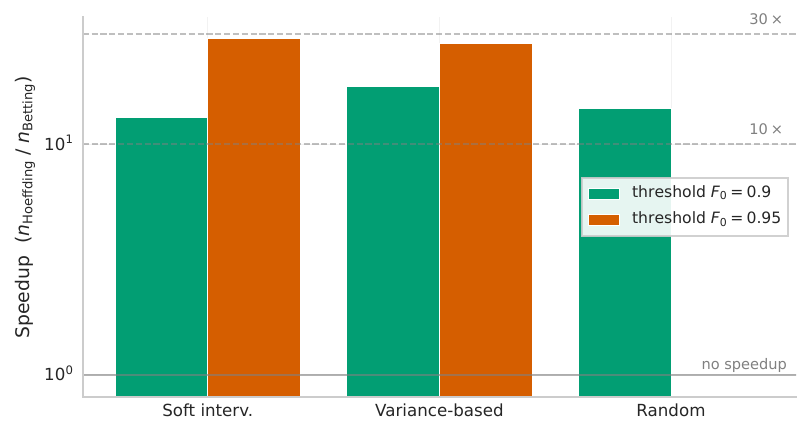}
\caption{Certification-cost ratio $n_{\text{Hoeffding}} / n_{\text{Betting}}$ for each method at
$k\!=\!256$, $p\!=\!0.1$, under i.i.d.\ sampling. Each run stops when the lower confidence bound on $F$ exceeds
the target $F_0$. Hard pruning is omitted because it never certifies; dotted reference lines mark $10\times$
and $30\times$.
}
\label{fig:e1_cost}
\end{figure}

Tables~\ref{tab:e1_results} and~\ref{tab:e1_mild} give the corresponding fidelity estimates and
stopping times.

\begin{table}[t]
\centering
\footnotesize
\setlength{\tabcolsep}{4pt}
\begin{tabular}{@{}lccc@{}}
\toprule
Method & $k\!=\!64$ & $k\!=\!128$ & $k\!=\!256$ \\
\midrule
Hard pruning   & 0.105 & 0.104 & 0.105 \\
Soft interv.   & 0.462 & 0.483 & 0.559 \\
Variance-based & 0.364 & 0.493 & \textbf{0.689} \\
Random         & 0.278 & 0.401 & 0.521 \\
\bottomrule
\end{tabular}
\caption{Estimated fidelity $\widehat{F}$ at $p\!=\!0.5$ ($n\!=\!5{,}000$,
Hoeffding CS, $1\!-\!\delta\!=\!0.95$; half-width $0.046$).
No configuration certifies $F\!\ge\!0.90$. Bold indicates the highest fidelity in the row block.
}
\label{tab:e1_results}
\end{table}

\begin{table}[t]
\centering
\footnotesize
\setlength{\tabcolsep}{2.5pt}
\begin{tabular}{@{}llcccc@{}}
\toprule
Method & $k$ & $\widehat{F}$ & \multicolumn{2}{c}{$n$ ($F\!\ge\!0.90$)} & Speedup \\
\cmidrule(lr){4-5}
 & & & Hoeff. & Bet. & \\
\midrule
Soft int.   & 64  & 0.929 & --- & 1{,}813 & $\infty$ \\
Soft int.   & 128 & 0.956 & 3{,}393 & 131 & $25.9\!\times$ \\
Soft int.   & 256 & 0.984 & 1{,}219 & 93 & $13.1\!\times$ \\
Var.-based & 64  & 0.633 & --- & --- & --- \\
Var.-based & 128 & 0.917 & --- & 4{,}725 & $\infty$ \\
Var.-based & 256 & \textbf{0.990} & 1{,}141 & \textbf{64} & $\mathbf{17.8\!\times}$ \\
Random         & 256 & 0.954 & 3{,}549 & 249 & $14.3\!\times$ \\
\midrule
\multicolumn{6}{@{}l}{\emph{$F\!\ge\!0.95$ certification:}} \\
Soft int.   & 256 & 0.984 & 9{,}849 & 342 & $28.8\!\times$ \\
Var.-based & 256 & 0.990 & 6{,}888 & \textbf{252} & $\mathbf{27.3\!\times}$ \\
\bottomrule
\end{tabular}
\caption{Certification at $p\!=\!0.1$ ($n$ up to $10{,}000$, i.i.d., $1\!-\!\delta\!=\!0.95$).
``---'' means the threshold is not certified within budget. Bold indicates the fastest certification time for
each threshold.
}
\label{tab:e1_mild}
\end{table}

\subsection{E2: Certified patching effects and circuit completeness in transformers}
\label{sec:e2}

The second experiment evaluates component-effect claims in a transformer. Here each intervention
requires a full GPT-2 Small forward pass, so certification cost is central.

\paragraph{Setup.}
We use GPT-2 Small (12 layers, 12 heads, 768-dimensional residual stream;
$\sim$124M parameters) on the Indirect Object Identification (IOI) task
\citep{wang2023ioi}.
In this task, the model must complete sentences of the form
``When Mary and John went to the store, John gave a drink to'' with the indirect object (``Mary'').
The behavior score is the IOI logit difference $g = \text{logit}(\text{IO}) - \text{logit}(\text{S})$,
where IO is the indirect object and S is the subject.
We evaluate circuits discovered via ACDC \citep{conmy2023acdc}, attribution patching
\citep{syed2024attributionpatching}, and AtP* \citep{kramar2024atpstar}, as well as the
hand-identified IOI circuit from \citet{wang2023ioi}.
Patching interventions restore a set of nodes $S$ from clean to corrupted prompts,
following best practices \citep{zhang2023patching,hanna2024faithfulness}.
We use the clipped recovery score $\Delta$ from Section~\ref{sec:estimand}; CIF certifies
$\mu=\E[\Delta]$, where larger values mean more recovery of the clean behavior.

\paragraph{Results.}
Figure~\ref{fig:e2_patching} shows that all evaluated circuits have high recovery estimates,
$\hat\mu\in[0.959,0.972]$. The distinction is inferential: at $n=2{,}000$, Hoeffding radii are about
$0.070$, while betting radii are about $0.005$. Thus the same point estimates lead to much sharper lower
confidence bounds under betting.

Table~\ref{tab:e2_cert} reports the resulting stopping times. The full 13-head circuit certifies
$\mu\ge0.90$ at $n=102$ and $\mu\ge0.95$ at $n=357$ under betting; under Hoeffding, the corresponding
requirements are $n=1{,}875$ and more than the $2{,}000$-sample budget. Even the 3-head name-mover circuit
certifies $\mu\ge0.90$ in 110 samples under betting.

\begin{figure}[t]
\centering
\includegraphics[width=\columnwidth]{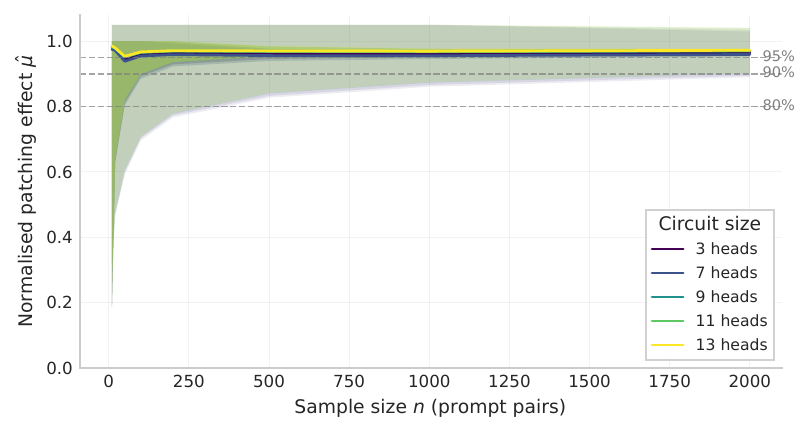}
\caption{Confidence sequences for patching recovery $\mu$ on the IOI task with GPT-2 Small. Each
curve corresponds to a circuit of increasing size (3--13 heads). Lighter bands show Hoeffding CSs; darker
bands show betting CSs ($1\!-\!\delta=0.95$).
}
\label{fig:e2_patching}
\end{figure}

\begin{table}[t]
\centering
\footnotesize
\setlength{\tabcolsep}{2.5pt}
\begin{tabular}{@{}lccccc@{}}
\toprule
Circuit & $\hat\mu$ & \multicolumn{2}{c}{$n$ ($\mu\!\ge\!0.90$)} & \multicolumn{2}{c}{$n$ ($\mu\!\ge\!0.95$)} \\
\cmidrule(lr){3-4}\cmidrule(lr){5-6}
 & & Hoeff. & Bet. & Hoeff. & Bet. \\
\midrule
3 heads  & 0.964 & --- & 110 & --- & 840 \\
7 heads  & 0.959 & --- & 118 & --- & 1{,}545 \\
9 heads  & 0.970 & 1{,}973 & 104 & --- & 413 \\
11 heads & 0.971 & 1{,}954 & 104 & --- & 369 \\
13 heads & 0.972 & 1{,}875 & \textbf{102} & --- & \textbf{357} \\
\bottomrule
\end{tabular}
\caption{Certification of IOI circuits (GPT-2 Small, i.i.d., $1\!-\!\delta\!=\!0.95$).
``---'' means the threshold is not certified within $n\!=\!2{,}000$.
}
\label{tab:e2_cert}
\end{table}

Figure~\ref{fig:e2_completeness} plots recovery against circuit size at $n=2{,}000$. Under betting,
the lower confidence bound ranges from $0.953$ for the 7-head circuit to $0.966$ for the 13-head circuit,
suggesting diminishing returns after the core name-mover heads are included.

\begin{figure}[t]
\centering
\includegraphics[width=\columnwidth]{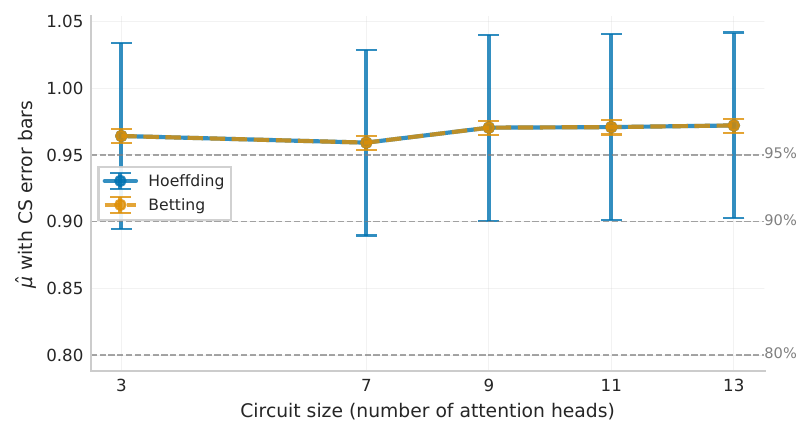}
\caption{Point estimate $\hat\mu$ and confidence sequence at $n\!=\!2{,}000$ as a function of IOI
circuit size. Solid curves show Hoeffding CSs; dashed curves show betting CSs.
}
\label{fig:e2_completeness}
\end{figure}

\subsection{E3: Sensitivity to intervention distribution and metric choice}

CIF certifies a metric under a declared intervention distribution. This experiment checks how much the
conclusion changes when that design choice changes.

\paragraph{Setup.}
We fix a model and circuit and evaluate under multiple $\Pi$:
different corruption baselines (for patching), different mask strengths (for interchange), and different
behavior metrics (probability vs logit difference vs KL) \citep{zhang2023patching}.
For each setting, we report confidence sequences and check whether point-estimate differences survive
uncertainty quantification.

\paragraph{Results.}
Figure~\ref{fig:e3_sensitivity} shows point estimates and confidence sequences across four swap probabilities
$p \in \{0.05, 0.1, 0.2, 0.5\}$ and three discrepancy metrics
(0--1 loss, clipped KL, clipped $L_2$).
At mild perturbation ($p \le 0.2$), the three metrics yield overlapping CSs:
$\widehat{F} \in [0.94, 0.99]$ with CS width $0.067$, so metric choice has no
statistically detectable effect.
At $p = 0.5$, the metrics separate: the 0--1 loss gives $\widehat{F} = 0.674$
while the clipped KL gives $\widehat{F} = 0.578$, and their CSs do \emph{not} overlap
(gap $\approx 0.03$).
At aggressive perturbation levels, the choice of discrepancy function
can therefore change qualitative conclusions; CIF makes that dependence visible by reporting intervals rather
than only point estimates.

\begin{figure}[t]
\centering
\includegraphics[width=\columnwidth]{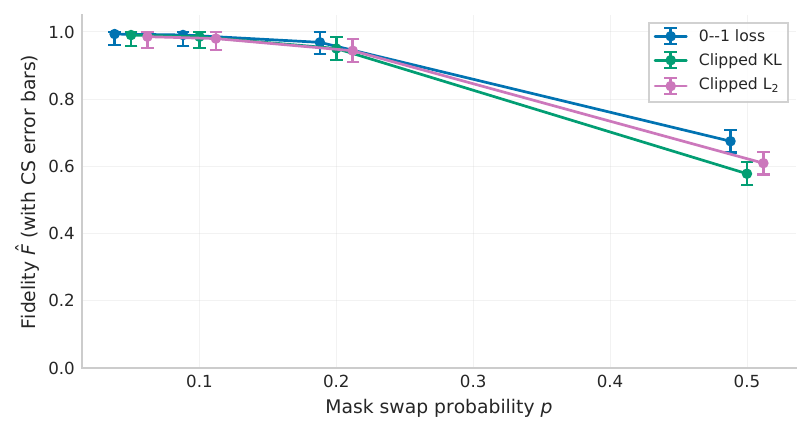}
\caption{Sensitivity of certified fidelity to swap probability $p$ and discrepancy metric $d$.
At $p\!\le\!0.2$, the confidence sequences overlap; at $p\!=\!0.5$, they separate.
}
\label{fig:e3_sensitivity}
\end{figure}

\subsection{E4: Coverage validation}

Finally, we validate coverage under i.i.d.\ sampling, adaptive sampling, and an aggressive peeking
protocol (500 runs per configuration; full details in Appendix~\ref{app:e4}). Hoeffding CSs achieve empirical
coverage $1.000$ in all configurations. Betting CSs range from $0.930$ to $1.000$, always at or above the
nominal $1-\delta$ while using less excess coverage.

\section{Discussion and limitations}

\paragraph{Choosing the target distribution $\Pi$.}
CIF makes explicit that fidelity is defined relative to a choice of intervention distribution.
We recommend reporting results across a small family of $\Pi$ (e.g., weak vs strong interventions), and using
adaptive CIF to probe likely failures without biasing headline metrics.

\paragraph{Boundedness and clipping.}
Our simplest guarantees assume bounded discrepancies/effects.
For unbounded quantities (e.g., raw logit differences), one can either clip to a bounded range or use variance-adaptive
CSs under additional assumptions \citep{howard2021confseq}.
In particular, sub-Gaussian and sub-exponential variants of time-uniform CSs
\citep{howard2021confseq}, and asymptotic CSs \citep{waudby_smith2024confseq}, extend the same
anytime-valid workflow to unbounded estimands.
Clipping is often acceptable in interpretability because extreme outliers can reflect out-of-distribution
interventions rather than meaningful causal effects.

\paragraph{What CIF does not solve.}
CIF certifies estimates for a \emph{given} metric and distribution.
It does not guarantee that the chosen metric captures the desired notion of mechanistic truth.
However, CIF makes it harder to overclaim by p-hacking sample sizes or adaptively searching without accounting
for uncertainty.
More fundamentally, CIF addresses the \emph{estimation} problem  of reliably estimating a causal
estimand from finite data \citep{imbens2015causal}, but not the \emph{identification} problem of whether the
chosen abstraction captures the causal structure of interest.

\section{Conclusion and future directions}
\label{sec:conclusion}

We have introduced Certified Interventional Fidelity (CIF), a statistical framework that provides
anytime-valid confidence sequences for interventional interpretability metrics under both fixed and
adaptive sampling regimes.
CIF treats interventional metrics (interchange intervention accuracy, patching effects, circuit
completeness scores) as formal causal estimands and wraps their evaluation with sequential
inference guarantees that remain valid under arbitrary stopping, repeated monitoring, and adaptive
counterexample hunting.
CIF has a small interface: discrepancies must be bounded (or clipped to a bounded
range), and the tracker can be applied as an evaluation layer around any existing interventional
interpretability analysis.

A key practical finding is that the choice of CS construction can change certification cost by an order of magnitude.
Hoeffding-style CSs are simple and transparent but conservative: their width depends only on $n$
and the weight bound $W_{\max}$, not on the empirical variance of the observed scores.
Betting CSs (Section~\ref{sec:betting}) adapt to the data and reduce certification cost by
$10$--$30\times$ in our experiments, from thousands of forward passes to tens or low hundreds.
This makes CIF-based certification practical even for expensive models like GPT-2, where additional
forward passes are costly.

The immediate next step is replacing the Bonferroni correction in component scanning
(Section~\ref{sec:multi}) with e-value-based FDR control \citep{wang2022fdr}, which integrates
natively with anytime-valid inference and matters most for hierarchical and per-component circuit
scans, where Bonferroni is most lossy; extending CIF to multi-layer abstractions and richer
attention-based interventions is the next challenge.
CIF is deliberately a \emph{verification} layer, the estimation side of the
estimation/identification divide \citep{imbens2015causal}, but its anytime-valid bounds are
natural controllers for \emph{discovery}: integrated into automated circuit-discovery procedures
\citep{conmy2023acdc,syed2024attributionpatching,kramar2024atpstar}, they can prune low-fidelity
candidates early and reallocate evaluation budget toward contenders, turning certification from a
post-hoc check into an active component of the search.

\begin{acknowledgements}
AA was partly supported by the Patient-Centered Outcomes Research Institute (PCORI) under
award ME-2023C1-32148 and the National Institute of Mental Health under award R01MH139379.
\end{acknowledgements}

\paragraph{Reproducibility.}
All experiments, tables, and figures in this paper can be reproduced with the code,
data-processing scripts, and notebooks available at
\url{https://github.com/AsiaeeLab/certified-interventional-fidelity}.

\bibliography{refs}

\newpage
\onecolumn
\appendix

\begin{center}
\begingroup
\hrule height4pt
\vskip .25in
{\Large\bfseries 
Certified Interventional Fidelity:\\
Anytime-Valid, Adaptive Evaluation of Causal Claims in Mechanistic Interpretability\\
(Supplementary Material)}
\vskip .25in
\hrule height1pt
\vskip .25in
{\bfseries Amir~Asiaee}\\[0.5\baselineskip]
Department of Biostatistics\\
Vanderbilt University Medical Center\\
Nashville, TN 37232, USA
\vskip .25in
\endgroup
\end{center}

\section{Extended related work}
\label{app:related}

\subsection{Causal abstraction and interventional interpretability}

Causal abstraction relates causal models at different granularities through state and intervention maps,
with exact and approximate notions of commutativity
\citep{rubenstein2017causalconsistency,beckers2019abstracting,beckers2020approximate,massidda2023soft}.
Interchange interventions operationalize these ideas in neural networks, yielding IIA as a graded faithfulness
metric \citep{geiger2021causalabstractions,geiger2022inducing,geiger2024alignments,geiger2025causalabstraction}.
\citet{wu2023interpretability} show that distributed alignment search scales causal abstraction methods to
large language models, making interchange interventions practical beyond small controlled settings.

A second strand uses causal mediation analysis inside neural networks. \citet{vig2020causalmediation}
trace gender bias through transformer layers; activation patching, path patching, and causal tracing build on
the same interventional logic
\citep{meng2022rome,goldowsky_dill2023pathpatching,zhang2023patching}. Causal scrubbing tests
interpretability hypotheses through behavior-preserving resampling interventions
\citep{chan2022causalscrubbing}.

\subsection{Circuit discovery and scalable approximations}

Automated circuit discovery methods attempt to localize important components with fewer interventions
\citep{conmy2023acdc,syed2024attributionpatching,kramar2024atpstar,hanna2024faithfulness}.
\citet{dunefsky2024transcoders} show that transcoders enable finer-grained circuit analysis than sparse
autoencoders alone, increasing the number of hypotheses that require evaluation. These methods motivate CIF
because they often scan many candidates and refine hypotheses adaptively, precisely the setting where
fixed-sample point estimates can be misleading.

Recent work also studies the reliability of circuit evaluation itself.
\citet{makelov2024interpretability} identify an ``interpretability illusion'' in subspace activation patching,
where apparently causal results arise from dormant pathways rather than the intended subspace.
\citet{miller2024faithfulness} show that circuit faithfulness metrics are sensitive to the ablation method,
complicating cross-study comparisons. \citet{shi2024hypothesis} develop formal hypothesis tests for circuit
claims, and formal verification approaches provide complementary guarantees~\citep{gross2024compact}. CIF
complements these efforts by adding sequential, anytime-valid uncertainty quantification to adaptive
interpretability workflows.

\subsection{Mechanism transformations and compression as causal operations}

Recent work views a trained network as an SCM and interprets structured pruning as a hard intervention
with an interventional-risk objective; soft interventions correspond to affine mechanism replacements and can
be compiled exactly into a smaller dense network
\citep{asiaee2026causalmechanismreductionmechanism}. CIF serves a different role: it does not propose new
abstractions, but certifies the interventional fidelity of a proposed abstraction, compression, or circuit.

\subsection{Sequential inference and confidence sequences}

\citet{ramdas2023gametheoretic} survey confidence sequences, e-values, and safe testing under the
umbrella of game-theoretic statistics. CIF draws most directly on the nonparametric confidence sequences of
\citet{howard2021confseq}, which underlie our Hoeffding baseline, and the betting-based sequences of
\citet{waudbysmith2024estimating}, whose ONS strategy we use in Section~\ref{sec:betting}. Safe testing
\citep{grunwald2024safe,shafer2021testing} and e-values~\citep{vovk2021evalues} provide related tools for
optional stopping and evidence accumulation. The e-value false-discovery-rate methods of \citet{wang2022fdr}
are a natural replacement for Bonferroni correction when scanning many components.

In the bandits literature, \citet{karampatziakis2021offpolicy} develop off-policy confidence sequences
with importance weighting, the closest statistical predecessor to adaptive CIF's mixture-importance-sampling
layer. Concurrently, \citet{meloux2025mistatistical} diagnose variance and reliability issues in mechanistic
interpretability evaluations; CIF provides one way to turn that diagnosis into certified evaluation practice.

\subsection{Importance sampling and adaptive Monte Carlo}

Importance sampling and Horvitz--Thompson estimators enable unbiased estimation under distribution
shift \citep{horvitzthompson1952}. The potential-outcomes framework for causal inference
\citep{imbens2015causal} motivates treating interventional effects as estimands: each intervention defines a
potential outcome, and the interventional risk $R$ averages these outcomes under $\Pi$. CIF combines
bounded-weight mixture proposals with martingale-based, anytime-valid bounds that remain valid when the
proposal adapts over time.

\section{Additional CIF method mappings}
\label{app:worked_examples}

Table~\ref{tab:appendix_mapping} is organized by method family, with the claim class indicating what
kind of mechanistic claim that family typically evaluates. Some circuit methods appear near the boundary
depending on whether the reported score treats the circuit as a reduced mechanism to be faithful to the full
model or as a component set whose causal effect is being measured.

\begin{table}[h]
\centering
\begingroup
\footnotesize
\renewcommand{\arraystretch}{1.15}
\setlength{\tabcolsep}{3pt}
\begin{tabular}{@{}p{0.16\textwidth}p{0.18\textwidth}p{0.24\textwidth}p{0.22\textwidth}p{0.14\textwidth}@{}}
\toprule
Claim class & Method family & Mechanistic claim & Intervention family / sampling rule & Certified quantity \\
\midrule
Abstraction / reduction fidelity
& Causal abstraction, interchange interventions, IIA
\citep{geiger2021causalabstractions,geiger2025causalabstraction}
& A high-level causal model preserves the intervention behavior of the original network.
& Source input, donor input, and mask over abstract variables; $\Pi$ samples masks and donors.
& $F=1-\E[Z]$ \\

Abstraction / reduction fidelity
& Structured pruning, affine mechanism replacement, neural compression
\citep{asiaee2026causalmechanismreductionmechanism}
& A reduced network or transformed mechanism preserves relevant causal behavior, not only ordinary predictions.
& Coordinate, mechanism, or group interventions; $\Pi$ samples masks, groups, donors, or perturbation strengths.
& $F=1-\E[Z]$ or bounded risk \\

Abstraction / reduction fidelity
& Causal scrubbing
\citep{chan2022causalscrubbing}
& A hypothesis identifies which variables are causally relevant for behavior preservation.
& Resampling interventions generated by the hypothesis; $\Pi$ samples examples and resampling choices.
& Behavior mismatch or degradation \\

Boundary case
& Circuit completeness / circuit faithfulness
\citep{wang2023ioi,hanna2024faithfulness,miller2024faithfulness,shi2024hypothesis}
& A selected circuit functions as a reduced mechanism for the behavior, or as a component set with high causal effect.
& Restore, ablate, or compare selected nodes/edges under prompt distributions.
& Fidelity $F$ or recovery $\mu$ \\

Component effect / recovery
& Activation patching, causal tracing
\citep{vig2020causalmediation,meng2022rome,zhang2023patching}
& A component causally supports a behavior because restoring it recovers the clean behavior.
& Clean/corrupted prompt pair plus component set $S$; $\Pi$ may be a point mass or a sampler over components.
& $\mu=\E[\Delta]$ \\

Component effect / recovery
& Path patching
\citep{goldowsky_dill2023pathpatching}
& A directed path or edge set carries a causal effect between upstream and downstream components.
& Clean/corrupted prompt pair plus path or edge set $E$; $\Pi$ samples paths or fixes a discovered path set.
& Recovery or clipped effect mean \\

Component effect / recovery
& Ablation studies
\citep{wang2023ioi,miller2024faithfulness}
& Removing or corrupting a component changes behavior, indicating causal relevance.
& Node, head, edge, or feature set plus ablation baseline; $\Pi$ samples components, prompts, or baselines.
& Bounded degradation or effect \\

Discovery plus evaluation
& ACDC, attribution patching, AtP*, related circuit-discovery methods
\citep{conmy2023acdc,syed2024attributionpatching,kramar2024atpstar,hanna2024faithfulness}
& A search procedure proposes candidate components or circuits; a separate evaluation estimates their causal effect or fidelity.
& Candidate-dependent proposals over nodes, edges, or paths; selection may require multiple-comparison correction.
& Same CIF estimands after selection \\
\bottomrule
\end{tabular}
\caption{Extended mapping of common interventional interpretability and reduction evaluations into CIF.
The common requirement is to declare the input distribution $\cD$, intervention distribution $\Pi$, and bounded
per-intervention score.}
\label{tab:appendix_mapping}
\endgroup
\end{table}

\paragraph{Bernoulli IIA.}
When $Z=\1\{y_L^{\omega(I)}(x)\neq y_H^I(x)\}$ indicates disagreement, $R=\Prob(Z=1)$ and
$F=1-R=\Prob(Z=0)$ is interchange intervention accuracy. Exact binomial confidence sequences can also be used;
the betting CS in Section~\ref{sec:betting} already adapts to the low variance of such observations.

\paragraph{Patching recovery.}
For activation or path patching, the patched run follows the corrupted prompt except that selected
activations are copied from the clean run. With behavior score $g$, the bounded recovery $\Delta\in[0,1]$
measures the fraction of the clean-corrupted score gap recovered by the patch, and CIF certifies
$\mu=\E[\Delta]$.

\section{Proofs}
\label{app:proofs}

We collect here the proofs of all formal results stated in the main text.

\subsection{Proof of Proposition~\ref{prop:hoeffding} (Hoeffding confidence interval for a bounded interventional mean)}

\begin{proof}
The random variables $Y_1,\ldots,Y_n$ are i.i.d.\ with $Y_t\in[0,1]$ and $\E[Y_t]=\theta$.
By Hoeffding's inequality \citep{hoeffding1963probability,howard2021confseq}, for any $\epsilon>0$,
\[
\Prob\!\left( \left|\frac{1}{n}\sum_{t=1}^n Y_t - \theta\right| > \epsilon \right)
\le 2\exp\!\left(-2n\epsilon^2\right).
\]
Setting the right-hand side equal to $\delta$ gives
$\epsilon=\sqrt{\log(2/\delta)/(2n)}$, and substituting
$\widehat\theta_n=n^{-1}\sum_{t=1}^n Y_t$ completes the proof.
\end{proof}

\subsection{Proof of Theorem~\ref{thm:anytime} (Anytime Hoeffding CS via spending)}

\begin{proof}
The proof proceeds by a union bound over the fixed-sample Hoeffding inequality
(Proposition~\ref{prop:hoeffding}) applied at each sample size $n$.

For each $n\ge1$, Proposition~\ref{prop:hoeffding} with confidence parameter $\delta_n$ gives
\[
\Prob\!\left( |\widehat\theta_n - \theta| > \sqrt{\frac{\log(2/\delta_n)}{2n}} \right) \le \delta_n.
\]
Define $b_n=\sqrt{\log(2/\delta_n)/(2n)}$. By a union bound,
\[
\Prob\!\left( \exists\, n \ge 1 : |\widehat\theta_n - \theta| > b_n \right)
\le \sum_{n=1}^{\infty} \Prob\!\left( |\widehat\theta_n - \theta| > b_n \right)
\le \sum_{n=1}^{\infty} \delta_n.
\]
The spending schedule $\delta_n=6\delta/(\pi^2n^2)$ sums to $\delta$, so
\[
\Prob\!\left( \forall\, n \ge 1 : \theta \in [\widehat\theta_n - b_n, \, \widehat\theta_n + b_n] \right)
\ge 1 - \delta.
\]
Since the event $\{\forall n\ge1:\theta\in C_n\}$ holds uniformly over time, the CS remains valid
under any possibly data-dependent stopping time $T$: if $\theta\in C_n$ for all $n$, then in particular
$\theta\in C_T$.
\end{proof}

\subsection{Proof of Lemma~\ref{lem:unbiased} (Unbiasedness under adaptive proposals)}

\begin{proof}
Fix time $t$ and condition on the history
$\cH_{t-1}=\sigma(X_1,I_1,Y_1,\ldots,X_{t-1},I_{t-1},Y_{t-1})$. Given $\cH_{t-1}$, the proposal $q_t$ is
deterministic, $X_t\sim\cD$ is independent of the history, and $I_t\sim q_t(\cdot\mid\cH_{t-1})$.

By the tower property of conditional expectation,
\begin{align}
\E[w_t Y_t \mid \cH_{t-1}]
&= \E\!\left[\frac{\Pi(I_t)}{q_t(I_t)} \cdot Y(X_t,I_t) \;\middle|\; \cH_{t-1}\right] \notag \\
&= \E_{X_t \sim \cD}\!\left[
   \E_{I_t \sim q_t}\!\left[
     \frac{\Pi(I_t)}{q_t(I_t)} \cdot Y(X_t,I_t)
     \;\middle|\; X_t, \cH_{t-1}
   \right]
\right] \notag \\
&= \E_{X_t \sim \cD}\!\left[
   \sum_{I \in \cI} q_t(I) \cdot \frac{\Pi(I)}{q_t(I)} \cdot
   Y(X_t,I)
\right] \notag \\
&= \E_{X_t \sim \cD}\!\left[
   \sum_{I \in \cI} \Pi(I) \cdot
   Y(X_t,I)
\right] \notag \\
&= \E_{X \sim \cD, \, I \sim \Pi}[Y(X,I)] = \theta.
\label{eq:tower}
\end{align}
The full-support condition ensures the ratio is well-defined, and the cancellation
$q_t(I)\Pi(I)/q_t(I)=\Pi(I)$ restores the target intervention distribution. In the continuous case, sums are
replaced by integrals.

Taking unconditional expectations gives $\E[w_t Y_t]=\theta$ for all $t$, hence
$\E[\widehat\theta_n^{\mathrm{IS}}]=\theta$.

For the martingale property, $M_0=0$ and
$\E[M_n-M_{n-1}\mid\cH_{n-1}]=\E[w_n Y_n-\theta\mid\cH_{n-1}]=0$, so $(M_n)_{n\ge0}$ is a martingale.
\end{proof}

\subsection{Proof of Theorem~\ref{thm:adaptive} (Anytime-valid CS under adaptive sampling)}

\begin{proof}
By Lemma~\ref{lem:unbiased}, $M_n=\sum_{t=1}^n(w_t Y_t-\theta)$ is a martingale.
Since $0\le w_t Y_t\le W_{\max}$ almost surely, each martingale difference has conditional range width at most
$W_{\max}$.
Hoeffding--Azuma for martingales with bounded conditional ranges gives, for any $\epsilon>0$,
\[
\Prob\!\left( |M_n| > \epsilon \right) \le
2\exp\!\left(-\frac{2\epsilon^2}{n W_{\max}^2}\right).
\]
Setting $\epsilon'= \epsilon/n$ gives
\[
\Prob\!\left( |\widehat\theta_n^{\mathrm{IS}} - \theta| > \epsilon' \right)
\le 2\exp\!\left(-\frac{2n\,\epsilon'^2}{W_{\max}^2}\right).
\]
Thus for each fixed $n$,
$\Prob(|\widehat\theta_n^{\mathrm{IS}}-\theta|>b_n^{\mathrm{IS}})\le\delta_n$, where
$b_n^{\mathrm{IS}}=W_{\max}\sqrt{\log(2/\delta_n)/(2n)}$.

The same union bound over all $n\ge1$ as in Theorem~\ref{thm:anytime} yields
\[
\Prob\!\left(\forall\, n \ge 1: \theta \in
[\widehat\theta_n^{\mathrm{IS}} - b_n^{\mathrm{IS}},\; \widehat\theta_n^{\mathrm{IS}} + b_n^{\mathrm{IS}}]\right)
\ge 1 - \sum_{n=1}^{\infty}\delta_n = 1 - \delta.
\]
\end{proof}

\section{Practical reporting checklist}
\label{app:reporting}

\begin{itemize}
  \item State $\cD$ (data/prompt distribution), $\Pi$ (intervention distribution), and the bounded score
  being averaged, such as a discrepancy $Z$ or recovery score $\Delta$.
  \item Report a confidence sequence, not just a point estimate.
  \item If you sample adaptively, report $\alpha$ and confirm that you used mixture importance sampling.
  \item When scanning many components, either correct for multiple comparisons or clearly state that reported
  intervals are \emph{post-selection} and optimistic.
\end{itemize}

\section{E1 certification at additional pruning levels}
\label{app:extra_k}

Table~\ref{tab:e1_extra_k} extends Table~\ref{tab:e1_mild} to the remaining pruning levels
$k\in\{32, 384, 512\}$ ($p=0.1$, i.i.d., $1-\delta=0.95$, $n$ up to $10{,}000$).
At $k=32$ no method reaches certifiable fidelity ($\hat F< 0.83$ everywhere).
At $k=512$ no units are removed, so all four methods coincide with the identity abstraction
and certify at identical cost.
The betting-vs-Hoeffding pattern of Table~\ref{tab:e1_mild} persists at every level where
certification is possible.

\begin{table}[h]
\centering
\begin{tabular}{@{}llccccc@{}}
\toprule
Method & $k$ & $\hat F$ & \multicolumn{2}{c}{$n$ ($F\!\ge\!0.90$)} & \multicolumn{2}{c}{$n$ ($F\!\ge\!0.95$)} \\
\cmidrule(lr){4-5}\cmidrule(lr){6-7}
 & & & Hoeff. & Bet. & Hoeff. & Bet. \\
\midrule
Soft interv.   & 32  & 0.824 & --- & --- & --- & --- \\
Soft interv.   & 384 & 0.989 & 1{,}081 & 93 & 6{,}746 & 233 \\
Hard pruning   & 32  & 0.102 & --- & --- & --- & --- \\
Hard pruning   & 384 & 0.101 & --- & --- & --- & --- \\
Variance-based & 32  & 0.494 & --- & --- & --- & --- \\
Variance-based & 384 & 0.997 & 933 & 64 & 4{,}973 & 125 \\
Random         & 32  & 0.196 & --- & --- & --- & --- \\
Random         & 384 & 0.988 & 1{,}081 & 64 & 7{,}239 & 228 \\
Identity ($k=512$, all methods) & 512 & 1.000 & 889 & 64 & 4{,}172 & 125 \\
\bottomrule
\end{tabular}
\caption{Certification at additional pruning levels ($p=0.1$, i.i.d., $1-\delta=0.95$).
``---'' = not certified within $n=10{,}000$. At $k=512$ nothing is pruned, so all methods
reduce to the identity abstraction and share one row.}
\label{tab:e1_extra_k}
\end{table}

\section{Adaptive versus i.i.d.\ sampling: matched comparison and $\alpha$-sweep}
\label{app:adaptive}

This appendix reports auxiliary adaptive-sampling studies that inform the efficiency discussion.

\paragraph{Matched three-way comparison.}
In the E1 setting, we compare three samplers under the betting CS ($\delta=0.05$, $\alpha=0.3$
for both adaptive arms):
(i) i.i.d.\ sampling from $\Pi$;
(ii) the \emph{static} proposal $\tilde q(j)\propto \|W_{:,j}\|_2$ of Section~\ref{sec:is};
and (iii) a \emph{history-updated} proposal that, after each batch of 100 samples, refits a
smoothed per-coordinate failure rate $\tilde q_t(j)\propto (s_j+1)/(n_j+2)$, where $s_j$ and
$n_j$ are the cumulative disagreement count and visit count of coordinate $j$.
Two regimes are tested on five matched seeds each:
\emph{low-fidelity exclusion} (soft interventions, $k=256$, $p=0.5$, where
$\hat F_{5000}\approx 0.55$; the task is to certify $F<0.70$)
and \emph{borderline certification} (variance-based pruning, $k=128$, $p=0.1$, where
$\hat F\approx 0.92$; the task is to certify $F\ge 0.90$).

Table~\ref{tab:adaptive_three_way} and Figure~\ref{fig:adaptive_three_way} report stopping times.
The pattern is asymmetric.
For failure-finding, the history-updated proposal (median 118, IQR 101--129) beats the static
proposal on every seed (median 201, IQR 151--219) and is competitive with i.i.d.\ (median 88,
IQR 86--150; the history proposal wins 3 of 5 matched seeds with a tighter IQR).
For certification, every adaptive proposal loses to i.i.d.\ sampling: medians 3{,}800 (history, IQR
806--4{,}031) and 1{,}708 (static) versus 1{,}217 (i.i.d.).
All samplers converge to the same $\hat F$ in both regimes, consistent with
Lemma~\ref{lem:unbiased}.
We do not have a formal result for this direction-dependence; the observed pattern matches the
one-sided importance-sampling intuition described in Section~\ref{sec:is}.

\begin{table}[h]
\centering
\begin{tabular}{@{}lccc@{}}
\toprule
 & i.i.d. & Static adaptive & History adaptive \\
\midrule
Exclusion $n$ (median, IQR) & \textbf{88} (86--150) & 201 (151--219) & 118 (101--129) \\
Certification $n$ (median, IQR) & \textbf{1{,}217} (763--3{,}263) & 1{,}708 (1{,}501--3{,}007) & 3{,}800 (806--4{,}031) \\
\bottomrule
\end{tabular}
\caption{Matched five-seed comparison of samplers in the two E1 regimes (betting CS,
$\alpha=0.3$). Low-fidelity exclusion: soft interventions, $k=256$, $p=0.5$, certify $F<0.70$.
Borderline certification: variance-based pruning, $k=128$, $p=0.1$, certify $F\ge 0.90$.}
\label{tab:adaptive_three_way}
\end{table}

\begin{figure}[h]
\centering
\includegraphics[width=0.85\textwidth]{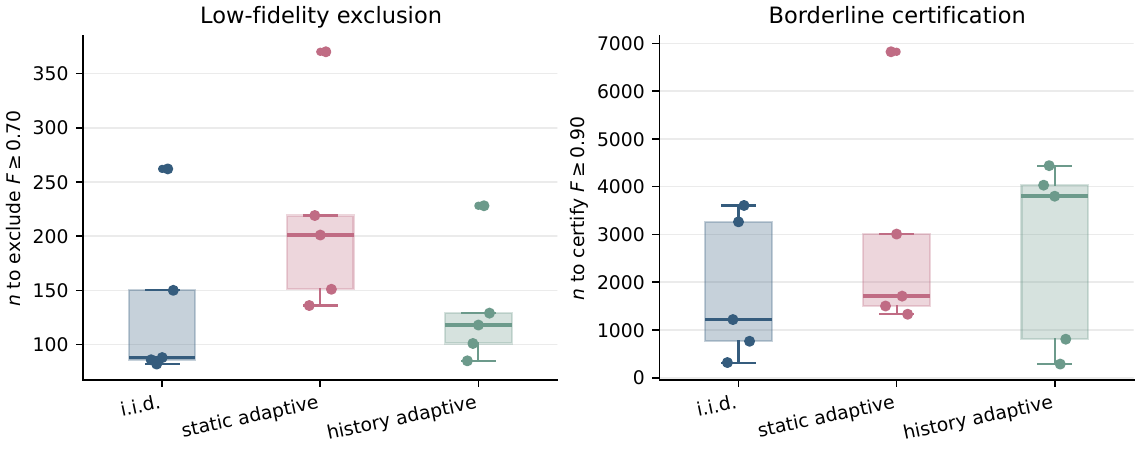}
\caption{Stopping times across five matched seeds per sampler (boxes: IQR; dots: individual
seeds). Left: low-fidelity exclusion ($n$ to certify $F<0.70$). Right: borderline certification
($n$ to certify $F\ge 0.90$; note the static-adaptive outlier at $n=6{,}824$).
History-updated adaptive sampling helps failure-finding (left) and hurts certification (right).}
\label{fig:adaptive_three_way}
\end{figure}

\paragraph{Sweep over the mixture rate $\alpha$.}
On the borderline-certification configuration of Table~\ref{tab:e1_mild} (variance-based pruning,
$k=256$, $p=0.1$, betting CS, three seeds per point), we sweep
$\alpha\in\{0, 0.1, 0.2, 0.3, 0.5, 0.7\}$ with the static proposal.
Figure~\ref{fig:alpha_sweep} shows median certification cost with IQR bars.
For $F_0=0.90$ the medians are 64, 96, 101, 121, 156, and 255 samples;
for $F_0=0.95$ they are 223, 190, 199, 240, 328, and 575.
The non-monotonicity at small $\alpha$ is within three-seed noise; the slowdown above
$\alpha=0.3$ is systematic and matches the $W_{\max}=1/(1-\alpha)$ weight-inflation bound
(at $\alpha=0.7$, $W_{\max}=3.33$ and certification at $F_0=0.95$ is $2.6\times$ slower than
i.i.d.).

\begin{figure}[h]
\centering
\includegraphics[width=0.55\textwidth]{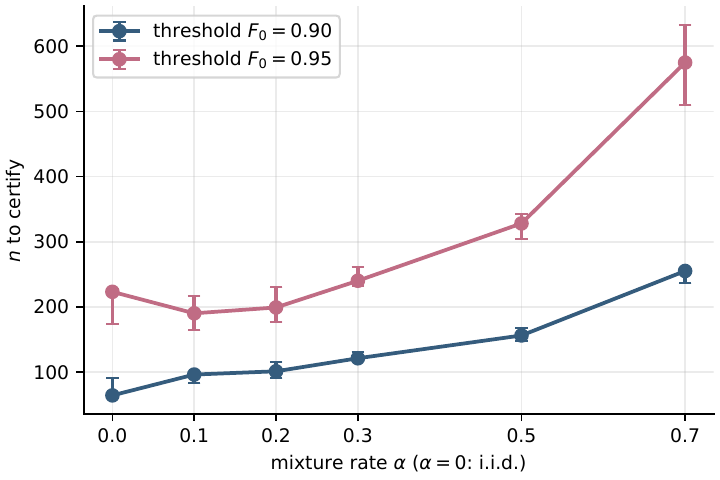}
\caption{Median certification cost versus mixture rate $\alpha$ (variance-based pruning,
$k=256$, $p=0.1$, betting CS; three seeds per point; bars: IQR). $F_0$ is the certification
threshold: each run terminates when the lower confidence bound on $F$ exceeds $F_0$.}
\label{fig:alpha_sweep}
\end{figure}

\section{E4: Coverage validation details}
\label{app:e4}

\paragraph{Setup.}
In the MNIST MLP setting (E1), we fix one abstraction and estimate $R$ across 500 independent runs
under three protocols: i.i.d.\ sampling from $\Pi$, adaptive mixture sampling with $\alpha=0.3$, and aggressive
peeking, where the interval is checked after every 10 samples and the run stops at the first exclusion of a
pre-specified value. For each protocol, we record whether the true $R$ (estimated by Monte Carlo with
$10^5$ i.i.d.\ samples) falls inside the confidence sequence at the stopping time.

\paragraph{Results.}
Table~\ref{tab:e4_coverage} reports empirical coverage for Hoeffding and betting CSs across three
confidence levels. Hoeffding coverage is $1.000$ in every configuration, reflecting the conservatism of the
range-based bound. Betting coverage ranges from $0.930$ to $1.000$, always at or above the nominal guarantee
while leaving less unused error budget.

\begin{table}[h]
\centering
\small
\begin{tabular}{@{}llcccc@{}}
\toprule
Sampling regime & $1-\delta$ & \multicolumn{2}{c}{Hoeffding} & \multicolumn{2}{c}{Betting} \\
\cmidrule(lr){3-4}\cmidrule(lr){5-6}
 & & Coverage & Std & Coverage & Std \\
\midrule
i.i.d.\ ($\alpha=0$) & 0.99 & 1.000 & 0.000 & 1.000 & 0.000 \\
Adaptive ($\alpha=0.3$) & 0.99 & 1.000 & 0.000 & 1.000 & 0.000 \\
Aggressive peeking & 0.99 & 1.000 & 0.000 & 0.992 & 0.004 \\
\midrule
i.i.d.\ ($\alpha=0$) & 0.95 & 1.000 & 0.000 & 0.998 & 0.002 \\
Adaptive ($\alpha=0.3$) & 0.95 & 1.000 & 0.000 & 1.000 & 0.000 \\
Aggressive peeking & 0.95 & 1.000 & 0.000 & 0.966 & 0.008 \\
\midrule
i.i.d.\ ($\alpha=0$) & 0.90 & 1.000 & 0.000 & 0.996 & 0.003 \\
Adaptive ($\alpha=0.3$) & 0.90 & 1.000 & 0.000 & 0.996 & 0.003 \\
Aggressive peeking & 0.90 & 1.000 & 0.000 & 0.930 & 0.011 \\
\bottomrule
\end{tabular}
\caption{Empirical coverage over 500 runs with $n\!=\!2{,}000$ samples per run. Hoeffding is
conservative in all configurations; betting remains at or above nominal coverage while producing tighter
intervals.}
\label{tab:e4_coverage}
\end{table}

\end{document}